\definecolor{Ckt}{HTML}{E41A1C}
\definecolor{Min}{HTML}{4D4D4D}
\definecolor{MinMore}{HTML}{377EB8}
\definecolor{Data}{HTML}{984EA3}
\definecolor{Rpurple}{HTML}{A020F0}
\newtheorem{theorem}{Theorem}
\DeclareMathOperator*{\argmin}{arg\,min}
\DeclareMathOperator*{\Diag}{Diag}
\newcommand{\ZZ}{\mathbb Z}
\newcommand{\RR}{\mathbb R}
\begin{document}

\title{Linear time dynamic programming for the exact path of optimal models selected from a finite set}

\author{Toby Hocking, Joseph Vargovich}
\maketitle
\begin{abstract}
Many learning algorithms are formulated in terms of finding model parameters which minimize a data-fitting loss function plus a regularizer. 
When the regularizer involves the $\ell_0$ pseudo-norm, the resulting regularization path consists of a finite set of models.
The fastest existing algorithm for computing the breakpoints in the regularization path is quadratic in the number of models, so it scales poorly to high dimensional problems. We provide new formal proofs that a dynamic programming algorithm can be used to compute the breakpoints in linear time. Empirical results on changepoint detection problems demonstrate the improved accuracy and speed relative to grid search and the previous quadratic time algorithm. 
\end{abstract}



\section{Introduction}
\label{sec:intro}
In this paper we propose a new algorithm related to the regularization path of machine learning problems such as

\begin{equation}
  \label{eq:theta_hat}
  \hat \theta(\lambda) =  \argmin_\theta \mathcal L(\theta) + \lambda R(\theta),
\end{equation}
where $\theta\in\mathbb R^p$ is a vector of model parameters, the loss function $\mathcal L:\mathbb R^p \rightarrow \mathbb R$ is typically convex, and $\lambda\geq 0$ is a penalty constant. 
The regularizer $R:\mathbb R^p\rightarrow \mathbb R_+$ is a non-convex function involving the $\ell_0$ pseudo-norm, 
\begin{equation}
    ||\theta||_0 = \sum_{j=1}^p I[\theta_j \neq 0]\in \mathbb Z_+ = \{0,1,\dots,p\}
\end{equation}
which counts the number of non-zero entries of the $\theta$ parameter vector ($I$ is the indicator function).
Some typical examples are given in Table~\ref{tab:examples}, which includes best subset regression \citep{Miller2002,Soussen2010}, optimal segmentation \citep{segment-neighborhood,optimal-partitioning}, $k$-means clustering \citep{macqueen1967}, and low-rank matrix factorization \citep{Huang2018}.
For learning it is important to compute not just a single model for one penalty $\lambda$, but also the full regularization path $\{\hat \theta(\lambda)|\lambda \geq 0\}=\{\Theta_1, \Theta_2, \dots, \Theta_{N}\}$. The path is a finite set of $N$ models, i.e. for any $\lambda\geq 0$, we have $\hat \theta(\lambda)=\Theta_k$ for some model size $k\in\{1,\dots, N\}$. To simplify the presentation we limit our discussion to regularizers $R(\Theta_k)=k$ which are equal to model size. More general regularizers, e.g. $R(\Theta_k)=r_k$ for some sequence of increasing values $r_1<\cdots<r_N$, can be handled using a straightforward modification of our proposed algorithm.

One example is best subset regression, which seeks the best $k$ features for a linear regression function. 
In this problem there a total of $p$ input features and therefore a set of $N=p+1$ models in the regularization path $\{\Theta_0, \Theta_1,\dots, \Theta_p\}$. 
This non-convex problem is NP-hard, so the optimal regularization path can only be computed for low-dimensional problems \citep{Bertsimas2016}. 
For high-dimensional problems there are various heuristic algorithms for computing approximate solutions, e.g. greedy forward/backward selection \citep{MallatZhang1993,Davis1994,Miller2002,Schniter2009,Soussen2010}, non-smooth non-convex regularizers \citep{FanLi2001,Zhang2010,Mazumder2011,SparseStep}, and $\ell_1$ regularization/LASSO \citep{Tibshirani1996,Chen1998}. 
Each weight vector $\Theta_k\in\mathbb R^p$ in the optimal or approximate regularization path has $k$ non-zero entries, $R(\Theta_k)=||\Theta_k||_0=k$. 
The extreme elements correspond to the ordinary least squares solution $\Theta_p$ with all features selected, and the completely regularized solution $\Theta_0$ with no features selected. 

Another example is optimal segmentation, which is the maximum likelihood model with $k$ segments ($k-1$ changepoints) for a sequential data set. 
In this problem there are $p$ sequence data, and each element $\Theta_k$ of the regularization path has $k\in\{1,\dots,p\}$ distinct segments ($k-1$ changes) along the sequence, i.e. $R(\Theta_k)=1+||D\Theta_k||_0=k$ where $D\in\mathbb R^{p-1\times p}$ is the matrix which returns the difference between adjacent pairs of data in the sequence.
Even though this problem is non-convex, an optimal solution can be computed via dynamic programming algorithms that are log-linear in the number of data, and linear in the number of models \citep{pelt,Maidstone2016}. 
There are also several fast heuristic algorithms, including binary segmentation \citep{binary-segmentation,Truong2018}, which computes an approximate regularization path of $N=p$ models in $O(N\log N)$ time on average and $O(N^2)$ in the worst case.
The optimal or approximate regularization path of $N=p$ models $\{\Theta_1, \Theta_2, \dots, \Theta_{p}\}$ has extreme elements $\Theta_1$ with no changes (one common segment/parameter for the entire data sequence) and $\Theta_{p}$ with a change after every data point (a different segment/parameter for each data point). 


\begin{table*}[t]
    \centering
    \begin{tabular}{cccc}
        Problem &  Loss  & Regularizer & Model complexity  \\
            \hline
        Best Subset Regression & $||X\theta-y||_2^2$ & $||\theta||_0$ & features selected \\
        Optimal Segmentation & $||\theta-y||^2_2$ & $|| D\theta ||_0$ & segments/changepoints\\
        $k$-means Clustering & $||\theta M - X||_F$ & $||\theta^T \mathbf 1||_0$ & cluster centers\\
        Matrix Factorization & $||U\Diag(\theta) V^T-X||_F$ & $||\theta||_0$ & rank\\[0.1cm]
    \end{tabular}
    \caption{Examples of machine learning problems which use the $\ell_0$ pseudo-norm as a regularizer.}
    \label{tab:examples}
\end{table*}

In this context solving the penalized problem~(\ref{eq:theta_hat}) for a given penalty $\lambda\geq 0$ results in one of the solutions to the corresponding constrained problem,
\begin{equation}
  \label{eq:L_k_theta}
   L_k=\min_{\theta}
      \mathcal L(\theta), 
  \text{ subject to } R( \theta) \leq k,
\end{equation}
where $k\in\mathbb Z_+$ is the model size (selected features, changepoints, clusters, etc). 
If the loss values $L_1>\dots>L_N$ in a regularization path $\{\Theta_1, \Theta_2, \dots, \Theta_{N}\}$ are known, they can be used to define the model selection function
\begin{equation}
\label{eq:k*}
k_N^*(\lambda) = \argmin_{k\in\{1,\dots, N\} }
\underbrace{L_k + \lambda k}_{f_k(\lambda)}.
\end{equation}
The model selection function~(\ref{eq:k*}) returns the (smallest) model complexity $k$ which is optimal for a given penalty parameter $\lambda$. 
In this paper we provide a new formal proof that dynamic programming can be used to compute an exact representation of the model selection function $k^*_N(\lambda)$ in linear $O(N)$ time.

\subsection{Existing algorithms and related work}
The model selection function $k^*_N(\lambda)$ can be trivially evaluated for a single $\lambda$ parameter in linear $O(N)$ time, which yields the solution to (\ref{eq:theta_hat}) via $\hat \theta(\lambda) = \Theta_{k^*_N(\lambda)}$.
However for some learning algorithms we need an exact representation of the model selection function for a full path of penalty $\lambda$ values.
Quadratic $O(N^2)$ time algorithms for computing the full path have been proposed for changepoint detection \citep{Lavielle2005,HOCKING-penalties} and regression \citep{Arlot2009}, but these algorithms are too slow for high-dimensional problems (e.g. full path of binary segmentation models for large data sets, see Section~\ref{sec:empirical-timings}).

The algorithm we propose is similar to the ``convex hull trick'' which is informally described, without any references to the machine learning literature, on a web page 
\citep{Convex-hull-trick}. 
The novelty of our paper with respect to that previous work is (1)  rigorous formal proofs of the linear time complexity and optimality, (2) explaining the relevance to the machine learning literature, (3) detailed theoretical and empirical comparisons with baseline algorithms.

A final related work is the CROPS algorithm of \citet{crops}, which also
proposes an algorithm that outputs an exact path of solutions for several penalty values. Both papers exploit the structure of the piecewise linear model selection function which relates the constrained and penalized problems. The input to our algorithm is a sequence of constrained models of sizes 1 to $N$, whereas the input to CROPS is an interval of penalty values. In general the two algorithms output different results (partial solution paths).
However in the special case when $N=p$ models are input to our algorithm (all possible models) and the interval $(0,\infty)$ is input to CROPS, then the two algorithms return the same output (the full path).


\subsection{Important definitions}
In this section, we will reference several concepts that are related to our proposed algorithm. Changepoints refer to a sudden deviation from previously recorded values within a given data set. Segments are used to fit each unique block data with an average value line. The model complexity $(k)$ refers to the total amount of unique segments present in a given segmentation model. A regularization path is the path of optimal segmentation models selected for a given range of penalties. Breakpoints represent the penalty value in which the algorithm switches from a previously selected model to a new one based on the evaluation of the model selection function (\ref{eq:k*}).

\subsection{Contributions and organization}

In this paper we propose a new dynamic programming algorithm for computing the model selection function $k^*_N(\lambda)$, and we prove that it computes an exact representation for all penalties $\lambda\geq 0$ (Section~\ref{sec:algorithm}).
Our second contribution is a theoretical analysis of the time complexity of our algorithm, which demonstrates that it is linear $O(N)$ time in the worst case; we also provide a theoretical analysis of previous algorithms in terms of the framework of this paper  (Section~\ref{sec:pseudocode}). Our third contribution is an empirical study of time complexity in several real and synthetic data sets, including a comparison with previous algorithms (Section~\ref{sec:empirical-complexity}). Our final contribution is an empirical study of the prediction accuracy in cross-validation experiments on supervised changepoint detection problems (Section~\ref{sec:accuracy}). The paper concludes with a discussion (Section~\ref{sec:discussion}).

\section{Dynamic programming algorithm}
\label{sec:algorithm}
We propose a dynamic programming algorithm for $N$ decreasing loss values $L_1>\cdots>L_N$; it computes an exact representation of the $k^*_N(\lambda)$ model selection function for all penalties $\lambda\geq 0$. 

\subsection{Exact representation of a piecewise constant function using breakpoints}
\label{sec:breakpoints}
Our proposed algorithm recursively computes $k^*_N$ from $k^*_{N-1}$, so is an instance of dynamic programming \citep{bellman}. At each step/iteration $t\in\{1,\dots, N\}$ of the algorithm, the algorithm stores a set of $M_t\in\{1,\dots,t\}$ selectable models,
\begin{equation}
\label{eq:K}
     1=K_{t,1}<K_{t,2}<\cdots<K_{t,M_t}=t.
\end{equation}
The algorithm also stores a corresponding set of breakpoints,
\begin{equation}
    \infty=b_{t,0} > b_{t,1} > \cdots > b_{t,M_t}=0.
\end{equation}
These two sets define for all $t\geq 1$ a recursively computed model selection function,
\begin{equation}
    F_t(\lambda) = \begin{cases}
     K_{t,1} & \text{ if } \lambda\in ( b_{t,1}, b_{t,0} )\\
     &\vdots\\
     K_{t,M_t} & \text{ if } \lambda\in ( b_{t,M_t}, b_{t,M_t-1} )
    \end{cases}
    \label{eq:F_t}
\end{equation}
We prove later (Theorem~\ref{thm:optimal}) that the recursively computed function $F_N$ is identical to $k^*_N$, the desired model selection function~(\ref{eq:k*}). The geometric interpretation of the models $K_{t,i}\in\mathbb Z$ and breakpoints $b_{t,i}\in \mathbb R$ are shown in
Figure~\ref{fig:three-iterations}. Each breakpoint is a penalty value where the min cost (grey segments) changes from one cost function to another (black lines). 
\begin{figure*}[!b]
  \centering
  \input{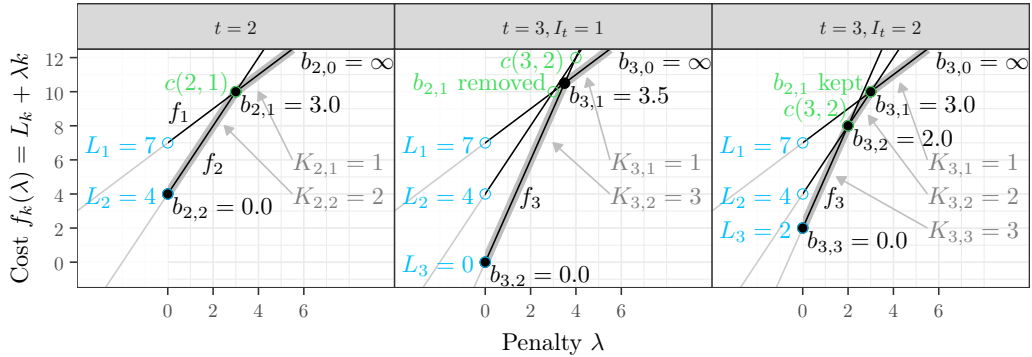}
  \vskip -.5cm
  \caption{Two possible runs of the algorithm for $N=3$ models. \textbf{Left:} at iteration $t=2$, the cost function $f_2(\lambda)$ is minimal (grey curve) for penalties $\lambda<b_{2,1}=c(2,1)$, and $f_1(\lambda)$ is minimal otherwise. \textbf{Middle:} at iteration $t=3$ a small $L_3$ value results in $c(3,2)>b_{2,1}$ so we remove $b_{2,1}$ and $K_{2,2}=2$, because $f_2(\lambda)$ is no longer minimal for any $\lambda$. \textbf{Right:} at iteration $t=3$ a large $L_3$ value results in $c(3,2)<b_{2,1}$ so we keep $b_{2,1}=b_{3,1}$ and store a new breakpoint $c(3,2)=b_{3,2}$.}
  \label{fig:three-iterations}
\end{figure*}
\subsection{Dynamic programming update rules}
\label{sec:updates}
The algorithm starts at the first iteration $t=1$ by initializing $M_1=1$ model with 
\begin{equation}
\label{eq:init}
K_{1,1}=1,\, b_{1,1}=0,\, b_{1,0}=\infty,
\end{equation}
which is an exact representation of the first model selection function $F_1$. For all other iterations $t>1$, the algorithm begins by discarding any breakpoints which are no longer necessary, then adds one new breakpoint. In particular it first computes the index $i$ corresponding to the largest model from the previous iteration $t-1$ which is still selected at iteration $t$,
\begin{equation}
    I_t = \max\big\{i\in\{1,\dots,M_{t-1}\} \mid 
    c(t, i) < b_{t-1,i-1} 
    \big\}.
\label{eq:It}
\end{equation}
The new candidate breakpoint $c(t,i)$ is 
the penalty for which the new cost function $f_t$ is equal to a previous function $f_{K_{t-1,i}}$,
\begin{equation}
\label{eq:cti}
     c(t,i) = \frac{L_{K_{t-1,i}} - L_t}{t - K_{t-1,i}}.
\end{equation}

A modification to use more general regularizers involves replacing $t - K_{t-1,i}$ with $r_{t}-r_{K_{t-1,i}}$ in the denominator of equation~(\ref{eq:cti}).

We then compute the number of models selected 
at iteration $t$,
\begin{equation}
    M_t = I_t + 1.
\end{equation}
The algorithm stores the new set of models selected at iteration $t$,
\begin{equation}
\label{eq:Kti}
    K_{t,i} = \begin{cases}
    K_{t-1,i} & \text{ for } i\in\{1,\dots, I_t\}\\
    t & \text{ for } i=M_t.
    \end{cases}
\end{equation}
The algorithm also stores the new breakpoint $c(t,I_t)$ along with some of the previous breakpoints,
\begin{equation}
    b_{t,i} = \begin{cases}
    b_{t-1,i} & \text{ for } i\in\{0,\dots, I_{t}-1\}\\
    c(t, I_t) & \text{ for } i=I_t\\
    0 & \text{ for } i=M_t.
    \end{cases}
    \label{eq:bti}
\end{equation}
Once the selected models $K_{t,i}$ and breakpoints $b_{t,i}$ have been recursively computed via (\ref{eq:Kti}--\ref{eq:bti}), the model selection function $F_t$ is defined using equation~(\ref{eq:F_t}).

\subsection{Demonstration of algorithm up to $t=3$}
In this section we provide two example runs of the algorithm.
The initialization creates an exact representation of $k^*_1=F_1$, via one possible model $K_{1,1}=1$ which is selected for all $\lambda$ between the two breakpoints $b_{1,1}=0<\infty=b_{1,0}$. The second iteration computes the candidate breakpoint $c(2,1)$ which is stored in the new breakpoints $b_{2,2}=0<c(2,1)<\infty=b_{2,0}$, along with $M_2=2$ models $K_{2,2}=2>1=K_{2,1}$ (Figure~\ref{fig:three-iterations}, left).

At iteration $t=3$ we first compute the candidate $c(3,2)$ and compare it to the stored breakpoint $b_{2,1}$. If $c(3,2)\geq b_{2,1}$ then $I_3=1$ and so $b_{2,1}$ is removed (Figure~\ref{fig:three-iterations}, middle). 
The candidate $c(3,2)$ is also discarded; the new breakpoints $b_{3,2}=0<c(3,1)<\infty=b_{3,0}$ are computed and stored with $M_3=2$ models $K_{3,2}=3>1=K_{3,1}$. Otherwise, $c(3,2)< b_{2,1}$ implies $I_3=2$ and the previous breakpoint $b_{2,1}$ is kept along with the candidate $c(3,2)$  (Figure~\ref{fig:three-iterations}, right). The breakpoints $b_{3,3}=0<c(3,2)<b_{2,1}<\infty=b_{3,0}$ are stored with $M_3=3$ models $K_{3,3}=3>2>1=K_{3,1}$.

\subsection{Recursive update rules are optimal}
Equations~(\ref{eq:K}--\ref{eq:bti}) define a dynamic programming algorithm, because the recursively computed $F_N$ is optimal in the sense of equation~(\ref{eq:k*}), as demonstrated in the following theorem.
\begin{theorem}[Update rules yield the optimal model selection function]
\label{thm:optimal}
The recursively computed function $F_N$~(\ref{eq:F_t}) and the model selection function $k^*_N$~(\ref{eq:k*}) are identical.
\end{theorem}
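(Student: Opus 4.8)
The plan is to prove the stronger statement that $F_t = k^*_t$ for every $t\in\{1,\dots,N\}$ by induction on $t$, where $k^*_t(\lambda)=\argmin_{k\in\{1,\dots,t\}} f_k(\lambda)$ is the model selection function~(\ref{eq:k*}) restricted to the first $t$ lines; specializing to $t=N$ then yields the theorem. The base case $t=1$ is immediate, since the initialization~(\ref{eq:init}) makes $F_1$ constant equal to $1$ on $(0,\infty)$, which is exactly $k^*_1$. For the inductive step I would assume $F_{t-1}=k^*_{t-1}$ and analyze how appending the line $f_t(\lambda)=L_t+\lambda t$ to the pool changes the minimizer. Because $t$ exceeds every previous index, the smallest-index tie-breaking gives the clean recursion
\[
  k^*_t(\lambda)=
  \begin{cases}
    t & \text{if } f_t(\lambda) < \min_{k<t} f_k(\lambda),\\
    k^*_{t-1}(\lambda) & \text{otherwise.}
  \end{cases}
\]

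The crucial geometric fact I would establish next is a \emph{single-crossover} property: the set of penalties where $f_t$ strictly beats all earlier lines is an interval $(0,\lambda^\star)$ with $\lambda^\star>0$. This follows because the lower envelope $\min_{k<t} f_k$ is concave (a pointwise minimum of affine functions), so its difference with the affine $f_t$ is convex; that convex difference is negative at $\lambda=0$ (since $L_t<L_{t-1}$ makes $f_t(0)$ strictly below the envelope value $L_{t-1}$) and tends to $+\infty$ as $\lambda\to\infty$ (since $f_t$ has the strictly largest slope $t$), hence it is negative on exactly one interval $(0,\lambda^\star)$. Consequently $k^*_t$ equals $t$ on $(0,\lambda^\star)$ and coincides with $k^*_{t-1}=F_{t-1}$ on $(\lambda^\star,\infty)$, so it suffices to show the update rules~(\ref{eq:Kti}--\ref{eq:bti}) reproduce precisely this function.

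The heart of the argument, and what I expect to be the main obstacle, is to show that the index $I_t$ defined by~(\ref{eq:It}) is exactly the index of the envelope piece of $k^*_{t-1}$ containing $\lambda^\star$, and hence that $\lambda^\star=c(t,I_t)$ from~(\ref{eq:cti}). I would argue this in two directions. First, if $I_t$ is the index with $\lambda^\star\in(b_{t-1,I_t},\,b_{t-1,I_t-1})$, then on that interval the optimal earlier model is $K_{t-1,I_t}$, so $f_t$ meets the envelope there exactly at the crossing $c(t,I_t)$, giving $\lambda^\star=c(t,I_t)<b_{t-1,I_t-1}$; thus $I_t$ satisfies the defining inequality of~(\ref{eq:It}). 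Second, for every $i>I_t$ the upper end $b_{t-1,i-1}\le b_{t-1,I_t}<\lambda^\star$, so the entire interval lies below $\lambda^\star$ where $f_t$ is strictly below the envelope; since $f_t<f_{K_{t-1,i}}$ throughout that interval, the crossing must satisfy $c(t,i)\ge b_{t-1,i-1}$, so no larger index meets the condition and the maximum in~(\ref{eq:It}) is attained precisely at $I_t$.

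With $I_t$ and $\lambda^\star=c(t,I_t)$ identified, the verification becomes bookkeeping: (\ref{eq:bti}) retains the breakpoints $b_{t-1,0},\dots,b_{t-1,I_t-1}$ and inserts $c(t,I_t)$, while (\ref{eq:Kti}) retains the models $K_{t-1,1},\dots,K_{t-1,I_t}$ and appends $t$, so via~(\ref{eq:F_t}) the function $F_t$ reproduces $F_{t-1}=k^*_{t-1}$ on $(\lambda^\star,\infty)$ and returns $t$ on $(0,\lambda^\star)$, matching $k^*_t$. The remaining care concerns edge cases where $\lambda^\star$ coincides with a stored breakpoint or where several lines are concurrent; these are controlled by the strict ordering $L_1>\cdots>L_N$ together with the smallest-index convention, and I would dispatch them by noting that the open-interval representation~(\ref{eq:F_t}) assigns each breakpoint penalty to the adjacent smaller-model (larger-$\lambda$) interval, consistent with the $\argmin$ in~(\ref{eq:k*}).
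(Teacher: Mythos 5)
Your proposal is correct and follows essentially the same route as the paper's proof: induction on $t$, the observation that the new line $f_t$ (largest slope, smallest intercept) crosses the previous lower envelope exactly once at $c(t,I_t)$, and verification that the update rules splice $t$ onto $(0,c(t,I_t))$ while preserving $F_{t-1}$ above it. Your convexity argument for the single-crossover property and your two-direction argument identifying the maximizer in~(\ref{eq:It}) with the envelope piece containing the crossover are more explicit than the paper's corresponding assertions, but they formalize the same geometric facts rather than constituting a different proof strategy.
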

\begin{proof}
The proof follows from equations~(\ref{eq:K}--\ref{eq:bti}) using induction on $t$. The base case is $t=1$, for which the initialization~(\ref{eq:init}) of the recursively computed function implies $F_1(\lambda) = 1$ for all $\lambda\in (0, \infty)$. Because at iteration $t=1$ there is only one possible model, it is clear that $F_1(\lambda)=k^*_1(\lambda)$ for all $\lambda$.

The proof by induction now assumes that $F_{t-1}(\lambda)=k^*_{t-1}(\lambda)$ for all $\lambda$; we will prove that the same is true for $t$. 
The recursive updates (\ref{eq:Kti}--\ref{eq:bti}) imply that
\begin{eqnarray}
F_{t}(\lambda) 
&=& \begin{cases}
    K_{t-1,1} & \text{ if } \lambda\in( b_{t-1,1}, b_{t-1,0})\\
    & \vdots\\
    K_{t-1,I_t} & \text{ if } \lambda \in ( c(t,I_t), b_{t-1, I_t-1} )\\
    t & \text{ if } \lambda \in (0, c(t,I_t) )
    \end{cases}\\
&=& \begin{cases}
    F_{t-1}(\lambda) & \text{ if } \lambda > c(t,I_t)\\
    t & \text{ if } \lambda < c(t,I_t)
    \end{cases}
    \label{eq:Ft_minus_one}
\end{eqnarray}
We need to prove that the function above returns the model size $k\in\{1,\dots, t\}$ with min cost $f_k(\lambda)$, for any penalty $\lambda$. Equations~(\ref{eq:It}--\ref{eq:cti}) imply
$K_{t-1,I_t}=F_{t-1}[c(t, I_t)]$ is the min cost model at the penalty $c(t, I_t)$ where the new cost function $f_t$ equals the previous min cost function,
\begin{equation}
    f_t[c(t,I_t)]=f_{K_{t-1,I_t}}[c(t,I_t)] = \min_{k\in\{1,\dots,t-1\}} f_k[c(t,I_t)].
\end{equation}
Because $f_t(\lambda)=L_t+\lambda t$ is a linear function with a larger slope than any of $f_1,\dots,f_{t-1}$, and a smaller intercept $L_t<L_{t-1}<\cdots$, we therefore deduce that $f_t$ is less costly before $c(t,I_t)$, and more costly after:
\begin{equation}
    \begin{cases}
    f_t(\lambda)< \min_{k\in\{1,\dots,t-1\}} f_k(\lambda) & \text{ for all } \lambda < c(t,I_t) \\ 
    f_t(\lambda)> \min_{k\in\{1,\dots,t-1\}} f_k(\lambda) & \text{ for all } \lambda > c(t,I_t) 
    \end{cases}
    \label{eq:ft_greater_less}
\end{equation}
Combining equations~(\ref{eq:Ft_minus_one},\ref{eq:ft_greater_less}) and using the induction hypothesis completes the proof that $F_t(\lambda)=k^*_t(\lambda)=\argmin_{k\in\{1,\dots, t\}} f_t(\lambda)$ for all $\lambda$.
\end{proof}

\section{Theoretical complexity analysis}
\label{sec:pseudocode}


In this section we propose pseudocode that efficiently implements the dynamic programming algorithm, and provide a proof of worst case linear time complexity. We also provide a theoretical analysis of the previous quadratic time algorithm in terms of the framework of this paper.

\subsection{Proposed linear time algorithm}

We propose Algorithm~\ref{algo:dp}, which is pseudocode for equations (\ref{eq:It}--\ref{eq:bti}). 
It recursively computes an exact representation of the model selection function $F_N$ in terms of breakpoints $b$ and selected models $K$. 

It begins by initializing the model selection function $F_1$ (line~\ref{line:init}). 
Then for all $t\in\{2,\dots,N\}$ it recusively computes $F_t$ from $F_{t-1}$. 
The first step in the loop (line~\ref{line:solve}) is to call the \textsc{Solve} sub-routine, which computes the number of selected models $M_t$ and the new breakpoint $\lambda=c(t,M_t-1)$. The number of while loop evaluations $w[t]$ can optionally be stored in order to analyze empirical time complexity. 
The next step is to store the new model $t$ and new breakpoint $\lambda$ (line~\ref{line:store}), which completes the computation of $F_t$.

\begin{table*}[t]
    \centering
        \begin{tabular}{ccc}
        Algorithm &  Best & Worst  \\
            \hline
        \textbf{This paper}, Algorithm~\ref{algo:dp} & $O(N)$ & $O(N)$\\
        \citep{Arlot2009, HOCKING-penalties}& $O(N)$ & $O(N^2)$ \\
        Always quadratic & $O(N^2)$ & $O(N^2)$
    \end{tabular}
    \caption{Summary of asymptotic time complexity in terms of number of input models, $N$.}
    \label{tab:complexity}
\end{table*}

\begin{algorithm}
\begin{algorithmic}[1]
\STATE Input: 
Array of $N$ real numbers $L[1]>\cdots>L[N]$ (decreasing loss values).
\STATE 
\label{line:allocate} Allocate: selected models $K\in\ZZ^N$, breakpoints $b\in\RR^N$, while loop iterations $w\in\ZZ^N$
\STATE \label{line:init} Initialize: number of models $M\gets 1$, breakpoint $b[1]\gets \infty$, selected model $K[1]\gets 1$
\FOR{$t=2$ to $N$}
  \STATE \label{line:solve} $M,\lambda,w[t]\gets\textsc{Solve}(K, b, L, M, t)$
  \STATE \label{line:store} $b[M]\gets \lambda$, $K[M]\gets t$\label{line:K}  // store a new breakpoint
\ENDFOR
\STATE Output: selected models $K[1:M]$, breakpoints $b[1:M]$, while loop iterations $w[2:N]$
\caption{\label{algo:dp}
Dynamic programming for computing exact representation of model selection function}
\end{algorithmic}
\end{algorithm}

In this paper we propose an amortized constant $O(1)$ time implementation of the \textsc{Solve} sub-routine (Algorithm~\ref{algo:proposed-solve}). 
It computes $I_t,M_t$ by solving the maximization in equation~(\ref{eq:It}) using a linear search over possible values of the model index $i$. 
It starts at the current number of selected models (line~\ref{line:initM}), and then repeatedly tests the criterion from equation~(\ref{eq:It}). 
If the current value of the model index $i$ does not satisfy the condition of the while loop (line~\ref{line:while}), then the model index is decremented to remove a breakpoint (line~\ref{line:remove}). 
The number of while loop iterations $w_t$ (lines~\ref{line:initM},\ref{line:remove}) can be optionally computed in order to analyze the empirical time complexity of the algorithm. 
Even though Algorithm~\ref{algo:proposed-solve} is clearly $O(M)$ in the worst case, in the next section we prove that it is amortized constant $O(1)$ time when used in the context of Algorithm~\ref{algo:dp}. Using this sub-routine therefore results in an overall linear $O(N)$ time complexity for Algorithm~\ref{algo:dp}, in the best and worst case  (Table~\ref{tab:complexity}).

\begin{algorithm}
\begin{algorithmic}[1]
\STATE Input: selected model sizes $K\in\mathbb Z^N$, breakpoints $b\in\mathbb R^N$, loss values $L\in\mathbb R^N$, number of selected models $M\in\mathbb Z$, new model size $t\in\mathbb Z$.
\STATE \label{line:initM} $i\gets M,\  w_t\gets 1$
\WHILE{$\lambda\gets (L[ K[i] ] - L[t])/(t- K[i]) \geq b[i]$}
\label{line:while}
    \STATE \label{line:remove} $i--,\  w_t ++$ // remove a breakpoint
\ENDWHILE
\STATE Output: number of models $i+1$, new breakpoint $\lambda$, number of while loop iterations $w_t$
\caption{\label{algo:proposed-solve}
Proposed \textsc{Solve} sub-routine
}
\end{algorithmic}
\end{algorithm}

\subsection{Previous quadratic algorithms}
\label{sec:quadratic-pseudocode}

In this section we provide a detailed comparison with several previously proposed quadratic algorithms \citep{Arlot2009,HOCKING-penalties}. 
In terms of the framework of this paper, these previous algorithms can be interpreted as computing $K_{t,i}, b_{t,i}$ for $t=N$, without computing any of the solutions at the previous iterations $t<N$. 
These other algorithms are therefore not performing dynamic programming. 
Whereas our algorithm starts at the smallest model size and then updates the model selection function for larger sizes, these other algorithms begin at the largest model size. 
In particular they start by initializing the largest model $K_{N,M_N}=N$ and the smallest breakpoint $b_{N,M_N}=0$, then for all $i\in\{M_N-1, \dots, 1\}$ they recursively compute $K_{N,i},b_{N,i}$ from $K_{N,i+1},b_{N,i+1}$. 
There are $M_N-1$ iterations of this recursive computation, and each iteration considers $K_{N,i+1}-1$ breakpoints. The overall algorithm is therefore $O(N M_N)$; best case $O(N)$ is when the number of selected models $M_N=2$ is small; worst case $O(N^2)$ is when $M_N=N$ is large (Table~\ref{tab:complexity}). Interestingly, the opposite is true of our algorithm (best case is when $M_N$ is large), as we prove in the next section.

\subsection{Proof of linear time and space complexity}
\label{sec:time-proof}
The overall space complexity of Algorithm~\ref{algo:dp} is clearly $O(N)$, because up to $N$ possible models/breakpoints can be computed. 
The time complexity depends on the implementation of the \textsc{Solve} sub-routine (line~\ref{line:solve}). 

The computation time of our proposed implementation of the \textsc{Solve} sub-routine (Algorithm~\ref{algo:proposed-solve}) depends on $w_t$, the number of times the while condition is evaluated (line~\ref{line:while}). 
In particular, the overall time complexity of Algorithm~\ref{algo:dp} is linear in total number of times the while condition is checked, 
\begin{equation}
\label{eq:WN}
    W_N = \sum_{t=2}^N w_t.
\end{equation}
The following result proves that Algorithm~\ref{algo:dp} is overall $O(N)$ time, by bounding the total number of times the while condition is checked.
\begin{theorem}[Best and worst case time complexity]
\label{thm:time-complexity}
For any $N$ inputs to Algorithm~\ref{algo:dp}, the total number of while loop iterations $W_N$ over all calls to Algorithm~\ref{algo:proposed-solve} is bounded: $N-1\leq W_N \leq 2N-3$.
\end{theorem}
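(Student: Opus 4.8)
The plan is to derive an \emph{exact} closed form for $W_N$ and then read off the two bounds from the range of the final model count $M_N$. The key observation is that the number of condition checks $w_t$ made by one call to Algorithm~\ref{algo:proposed-solve} is tightly linked to the decrease in the number of selected models from iteration $t-1$ to $t$, so the sum $W_N$ in~(\ref{eq:WN}) telescopes.

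First I would analyze a single call to the \textsc{Solve} sub-routine at iteration $t$. The loop enters with $i\gets M_{t-1}$ and $w_t\gets 1$, where the initial value $1$ accounts for the very first evaluation of the while condition. Each time the condition $c(t,i)\geq b_{t-1,i-1}$ holds, the algorithm decrements $i$, increments $w_t$, and re-checks; the loop halts exactly at $i=I_t$, the largest index satisfying $c(t,i)<b_{t-1,i-1}$ from~(\ref{eq:It}). Hence the number of decrements (breakpoint removals) is $M_{t-1}-I_t$, and since $w_t$ counts \emph{condition evaluations} rather than removals, we obtain $w_t = 1 + (M_{t-1}-I_t)$. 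Substituting the update rule $M_t = I_t+1$, i.e. $I_t = M_t-1$, yields the per-iteration identity $w_t = 2 + M_{t-1} - M_t$.

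Next I would sum this identity over $t=2,\dots,N$. The constant terms contribute $2(N-1)$, and the differences $M_{t-1}-M_t$ telescope to $M_1 - M_N$. Using the initialization $M_1=1$, this collapses to the exact formula $W_N = 2N-1-M_N$. The two bounds then reduce entirely to bounding $M_N$: I would argue $2\leq M_N\leq N$, where the upper bound $M_t\leq t$ is already recorded in~(\ref{eq:K}), and the lower bound $M_t\geq 2$ (for $t\geq 2$) holds because the set maximized in~(\ref{eq:It}) always contains $i=1$ (indeed $c(t,1)<b_{t-1,0}=\infty$ always), so $I_t\geq 1$ and $M_t = I_t+1\geq 2$. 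Setting $M_N=N$ gives the minimum $W_N=N-1$ and $M_N=2$ gives the maximum $W_N=2N-3$, matching the claim and confirming the earlier remark that the best case occurs when $M_N$ is large.

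The main subtlety I expect is the bookkeeping, not the estimation. One must pin down the off-by-one correspondence between the array indexing in Algorithm~\ref{algo:proposed-solve} (where $b[i]$ plays the role of $b_{t-1,i-1}$, consistent with $b[1]=\infty=b_{t-1,0}$) and the mathematical breakpoint indexing, and one must be careful that the initial $w_t\gets 1$ correctly contributes the ``$+2$'' rather than ``$+1$'' in the per-iteration identity. Once the relation $w_t = 2 + M_{t-1} - M_t$ is established, the telescoping sum and the range of $M_N$ are routine.
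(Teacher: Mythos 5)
Your proposal is correct and follows essentially the same route as the paper: you establish the identity $w_t = 2 + M_{t-1} - M_t$ (the paper's equation~(\ref{eq:model_size_iterations})), telescope the sum to get $W_N = 2N-1-M_N$, and conclude from $2\leq M_N\leq N$. The only difference is that you spell out the justification for $M_N\geq 2$ (via $c(t,1)<b_{t-1,0}=\infty$) and the off-by-one bookkeeping more explicitly than the paper does, which is a welcome addition rather than a deviation.
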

\begin{proof}
The proof uses the fact that for all $t\in\{2,\dots, N\}$, we have
\begin{equation}
    \label{eq:model_size_iterations} M_t = 2+M_{t-1}-w_t,
\end{equation}
which follows from the definition of the number of while loop iterations $w_t$ (on line~\ref{line:remove} of Algorithm~\ref{algo:proposed-solve}, every iteration decrements $i$, and therefore $M_t$). The total number of while loop iterations is thus
\begin{eqnarray}
W_N 
\label{eq:sum_w} &=& \sum_{t=2}^N w_t = \sum_{t=2}^N 2+M_{t-1} - M_t\\
\label{eq:distribute_sum} &=& 2(N-1)+ \sum_{t=2}^{N} M_{t-1}-\sum_{t=2}^N M_t \\
\label{eq:re_write_sum} &=& 2(N-1)+ \sum_{t=1}^{N-1} M_{t}-\sum_{t=2}^N M_t\\
\label{eq:subtract_two_sums} &=& 2N-2  + M_1 - M_N \\
\label{eq:M1_is_one} &=& 2N-1 - M_N.
\end{eqnarray}
The first two equalities (\ref{eq:sum_w}) follow from the definitions of the number of while loop iterations (\ref{eq:WN}--\ref{eq:model_size_iterations}). 
The next equalities come from distributing the sum (\ref{eq:distribute_sum}), then re-writing the second term as a sum from $t=1$ to $N-1$ (\ref{eq:re_write_sum}). 
The last equalities come from subtracting the terms in the two sums (\ref{eq:subtract_two_sums}), then using the fact that $M_1=1$ (\ref{eq:M1_is_one}). 
The result is obtained using the fact that the number of selectable models is bounded, $2\leq M_N\leq N$.
\end{proof}

The best case of Algorithm~\ref{algo:dp}, $W_N=N-1$ iterations, happens when the number of selected models is large, $M_N=N$; the worst case $W_N=2N-3$ iterations occurs when $M_N=2$.
Because the total number of iterations is always $O(N)$, the \textsc{Solve} sub-routine (Algorithm~\ref{algo:proposed-solve}) is amortized constant $O(1)$ time on average, even though it is linear in the number of models $O(M)$ in the worst case. 

\begin{figure}
  \centering
  \input{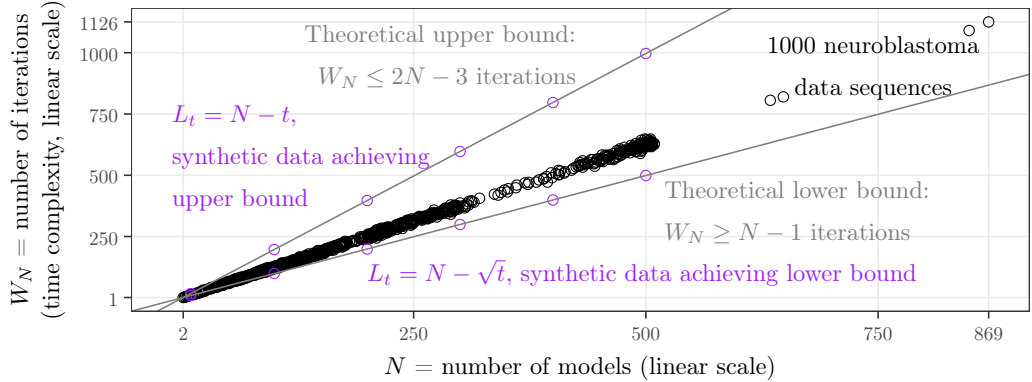}
  \vskip -0.5cm
  \caption{Empirical number of iterations $W_N$ of while loop in Algorithm~\ref{algo:proposed-solve} in synthetic loss values (\textcolor{Rpurple}{violet points}) and loss values from optimal changepoint models of real neuroblastoma data (black points) are linear $O(N)$ in the number of input models $N$, consistent with theoretical upper/lower bounds obtained in Theorem~\ref{thm:time-complexity} (\textcolor{gray}{grey lines}). }
  \label{fig:empirical-iterations}
\end{figure}

\section{Empirical complexity analysis}
\label{sec:empirical-complexity}

\label{sec:empirical-timings}
\begin{figure*}
  \centering
   \includegraphics[height=2.8in]{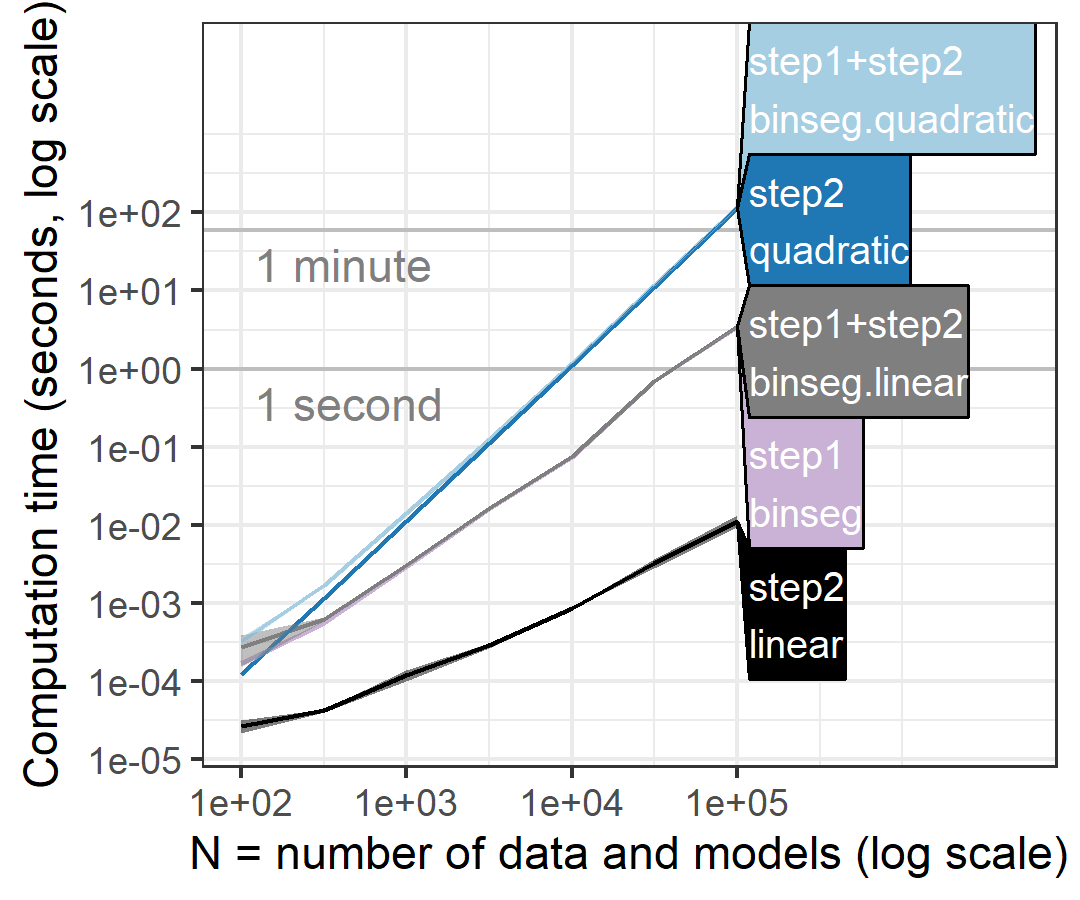}
   \includegraphics[height=2.8in]{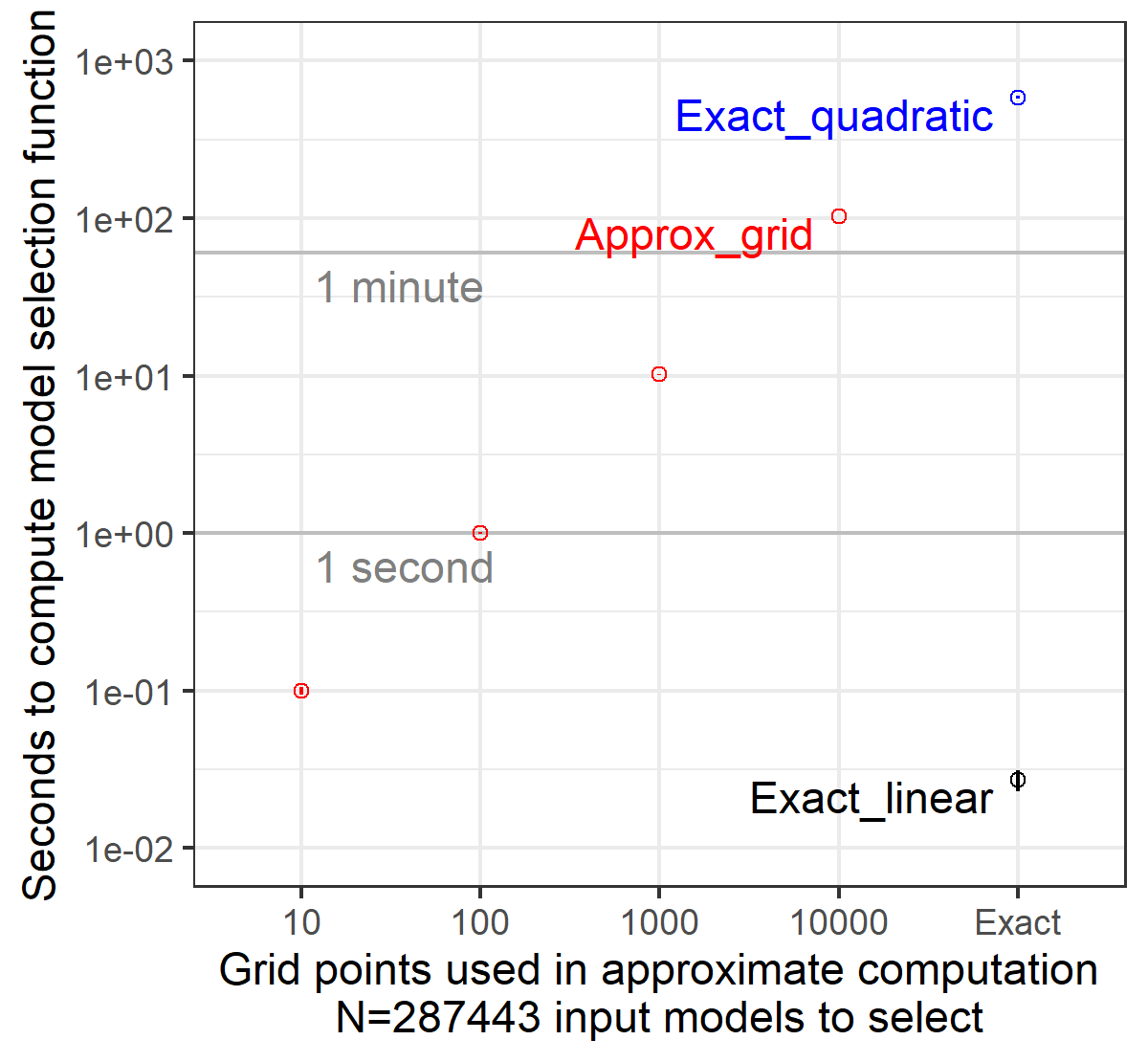}
  \vskip -0.5cm
  \caption{The proposed exact linear time algorithm is orders of magnitude faster than the previous exact quadratic time algorithm
  and naïve approximate grid search. 
  \textbf{Left:} the two exact algorithms compute the same result, but the proposed linear time algorithm is orders of magnitude faster, even when time to compute loss values via binary segmentation (binseg) is included in the timing (lines/bands for mean/SD over 5 timings).  
  \textbf{Right:} when used on loss values from $N=287443$ optimal changepoint models for one genomic data sequence, the proposed exact linear time algorithm is always faster than approximate grid search with at least 10 points (points/segments for mean/SD over 5 timings). }
  \label{fig:empirical-time}
\end{figure*}

In this section we empirically examine the number of iterations of our algorithm, and show that it is overall orders of magnitude faster than previous baselines.

\subsection{Empirical iteration counts are consistent with theoretical bounds}

As discussed in Section~\ref{sec:time-proof}, the time complexity of Algorithm~\ref{algo:dp} is linear $W_N$, the total number of iterations of the while loop in the \textsc{Solve} sub-routine. Here we demonstrate that the theoretical bounds on $W_N$ obtained in Theorem~\ref{thm:time-complexity} are consistent with the number of iterations obtained empirically in real and synthetic data.
First, we considered 1000 real cancer DNA copy number data sets of different sizes $p\in\{2, \dots,869\}$ from R package \texttt{neuroblastoma}. 
For each sequence data set $ \mathbf z\in \mathbb R^p$ we used the Pruned Dynamic Programming Algorithm (PDPA) of \citet{pruned-dp-new} to compute a sequence of optimal changepoint models. For each number of segments $k\in\{1,\dots, p\}$ the optimal loss is
\begin{align}
  \label{eq:L_k_cpt}
   L_k=&\ \ \min_{\mathbf \theta\in\RR^p}\ 
      \sum_{j=1}^p (\theta_j-z_j)^2\\
      \text{subject to} &\ \ 
      ||D\theta||_0=\sum_{j=1}^{p-1} I[\theta_j\neq \theta_{j+1}]=k-1.
      \nonumber
\end{align}
The PDPA returns a regularization path of $N=p$ models, from $k=1$ segment (no changepoints, $\theta_j=\theta_{j+1}$ for all $j$) to $k=N=p$ segments (change after every data point, $\theta_j\neq\theta_{j+1}$ for all $j$). We used the resulting loss values $L_1>\cdots>L_N$ as input to Algorithm~\ref{algo:dp}. We plotted the number of iterations $W_N$ as a function of data set size $N$ (black points in Figure~\ref{fig:empirical-iterations}), and observed that they always fall between the upper/lower bounds from Theorem~\ref{thm:time-complexity} (grey lines). These results provide empirical evidence that the time complexity of our algorithm is linear $O(N)$ in real data.

Second, we considered two synthetic sequences of loss values, $L_t=N-t$ for all $t\in\{1,\dots,N\}$ (e.g. $L_1=4>3>2>1>0=L_5$ for $N=5$) and $L_t=N-\sqrt{t}$ (e.g. $L_1=4>3.59>3.27>3>2.76=L_5$ for $N=5$). For these loss values we observed a number of iterations 
(violet points in Figure~\ref{fig:empirical-iterations}) that always falls on the upper/lower bounds (grey lines), which indicates that these synthetic data achieve the worst/best case. Overall these results provide a convincing empirical validation of our theoretical bounds from Theorem~\ref{thm:time-complexity}.

\subsection{Empirical timings suggest orders of magnitude speedups}

\begin{figure*}[!b]
    \centering
    \input{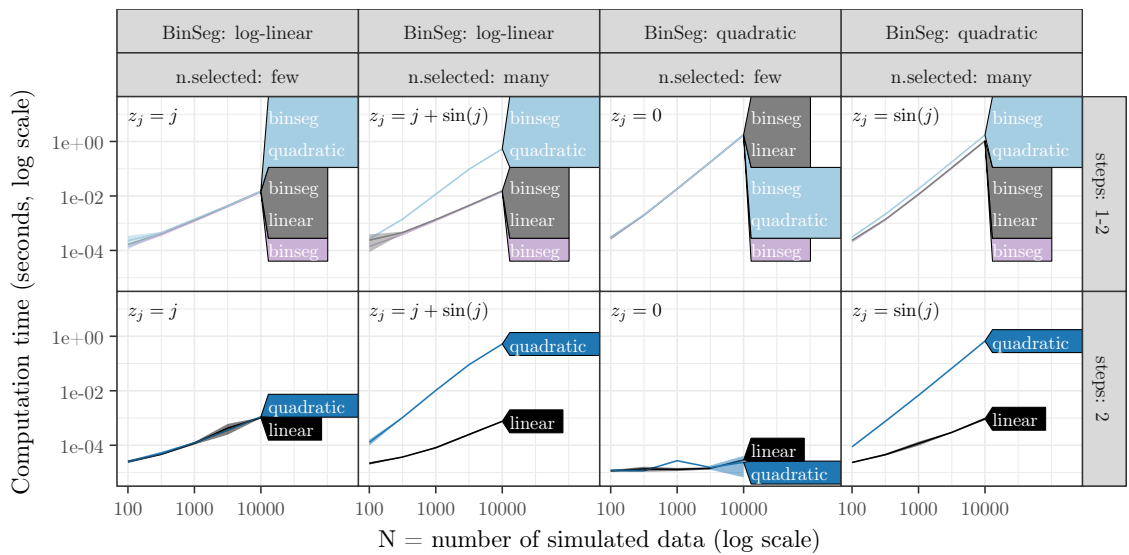}
    \caption{Binary segmentation (step 1) followed by exact model selection (step 2) was run on four synthetic data sequences $x_i$ (panels from left to right). 
    Both model selection algorithms (worst case quadratic, and proposed worst case linear time) output the exact path of selected models.
    \textbf{Bottom:} when there are few selected models (first and third columns) the quadratic algorithm achieves its best case linear time complexity; when there are many selected models (second and fourth columns) it achieves the worst case quadratic time complexity.
    \textbf{Top:} total timings over both steps show that the linear time algorithm offers substantial speedups when binary segementation achieves its best-case log-linear time complexity, and there are many selected models (second column).}
    \label{fig:binseg-quadratic-rigaill}
\end{figure*}

Our proposed algorithm takes as input a sequence of $N$ loss values, which must be computed by some other machine learning algorithm. In this section we therefore analyzed our algorithm in the context of a two-step pipeline: (i) compute the $N$ loss values, (ii) compute an exact representation of the model selection function. The overall time complexity of the two-step pipeline is determined by the slower of the two steps. If the first step is at least quadratic, then the pipeline is as well (using either linear or quadratic time model selection in the second step). However if the first step is sub-quadratic, then we expect that our linear time algorithm in the second step should result in speedups. 

\paragraph{Simulated data for which proposed linear algorithm results in speedups over previous quadratic algorithm.} For the first step we therefore use the log-linear binary segmentation algorithm, which inputs a data sequence $\mathbf z\in\mathbb R^p$, and computes an approximate solution to (\ref{eq:L_k_cpt}). The binary segmentation algorithm computes the full path of $N=p$ models with corresponding loss values $L_1,\dots, L_N$ in $O(N\log N)$ time on average \citep{binary-segmentation, Truong2018}. For each data set size $N\in\{10^2, \dots, 10^5\}$ we generate synthetic data sequences $z_j = \sin(j)+j/N$, for all $j\in\{1,\dots,N\}$. Figure~\ref{fig:empirical-time} (left) shows timings of binary segmentation alone (binseg), exact model selection algorithms alone (linear, quadratic), and two-step pipelines (binseg.linear, binseg.quadratic), on an Intel T2390 1.86GHz CPU. As expected, our proposed linear time algorithm is orders of magnitude faster than the previous quadratic time algorithm (when run alone, and also in the two-step pipeline). For example, for $N=10^5$ data, the binseg.linear pipeline takes about 3 seconds, whereas binseg.quadratic takes about 2 minutes. More generally, such timings are typical for any data for which binary segmentation runs in log-linear time, and selected models $M_N$ increases with the data set size $N$ (second column of Figure \ref{fig:binseg-quadratic-rigaill}, same as Figure~\ref{fig:empirical-time} left). However, there are other kinds of data for which our approach is no faster than the quadratic baseline (other columns of Figure \ref{fig:binseg-quadratic-rigaill}). For example, when binary segmentation runs in quadratic time, then our linear time model selection algorithm offers no speedups to the overall pipeline (third and fourth columns of Figure \ref{fig:binseg-quadratic-rigaill}). Also, since the previous (worst case quadratic) algorithm achieves its best case linear time complexity when the number of selected models $M_N$ is small/constant, then our proposed algorithm offers no speedups in this case (first columns of Figure \ref{fig:binseg-quadratic-rigaill}). Overall, we have shown that for some data sets, our linear time algorithm provides substantial speedups relative to the previous quadratic time algorithm.

\paragraph{Real data for which proposed linear algorithm is faster than grid search.} Another baseline algorithm for computing a representation of the model selection function is a naïve approximate grid search over $G$ penalties $\lambda$, which takes $O(NG)$ time. We expected this baseline to perform poorly in the context of large $N$ and large $G$, so we performed timings on a large \texttt{chipseq} data set from the UCI repository \citep{uci-ml-repo}. We first computed a regularization path of $N=287,443$ optimal changepoint models for a sequence of $p=1,656,457$ data, and then performed timings of the model selection algorithms on the resulting $N$ loss values.
We observed that our proposed linear time algorithm is always faster than approximate grid search with at least 10 grid points (Figure~\ref{fig:empirical-time}, right). 
For example the approximate grid search takes almost 2 minutes for $N=10,000$ grid points, whereas the proposed exact linear time algorithm takes only 27 milliseconds. Overall these data indicate that the proposed linear time algorithm is indeed faster than the two baselines in large data.

\section{Prediction accuracy in supervised changepoint problems}
\label{sec:accuracy}

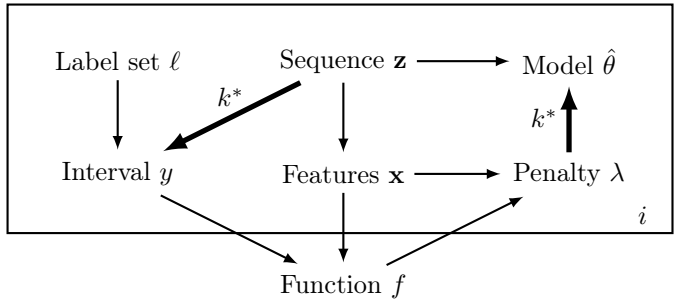
\begin{figure*}[!b]
  \centering
    \begin{tikzpicture}[->,>=latex,shorten >=1pt,auto,node distance=3cm, thick]
      \node (L) {Label set $\ell$};
      \node (y) [below of=L, node distance=1.5cm] {Interval $y$};
      \node (z) [right of=L] {Sequence $\mathbf z$};
      \node (x) [right of=y] {Features $\mathbf x$};
      \node (f) [below of=x, node distance=1.5cm] {Function $f$};
      \node (l) [right of=x] {Penalty $\lambda$};
      \node (m) [right of=z] {Model $\hat \theta$};
      \path 
      (L) edge (y)
      (z) edge [line width=2pt] node [above] {$k^*$} (y)
      (z) edge (x)
      (y) edge (f)
      (x) edge (f)
      (f) edge (l)
      (x) edge (l)
      (l) edge [line width=2pt] node {$k^*$} (m)
      (z) edge (m)
      ;
 \node[rectangle, inner sep=0mm, fit= (L) (l),label=below right: $i$, xshift=28mm] {};
  \node[rectangle, inner sep=5mm,draw=black!100, fit= (L) (l)] {};
    \end{tikzpicture}
  \caption{Computation graph for supervised changepoint detection in labeled sequences $i$. Directed edges start from inputs and end at outputs, e.g. interval $y$ can be computed using labels $\ell$ and sequence $\mathbf z$. Bold edges indicate computations which use the model selection function $k^*$.}
  \label{fig:computation-graph}
\end{figure*}

\begin{figure*}
  \centering
  \includegraphics[width=\textwidth]{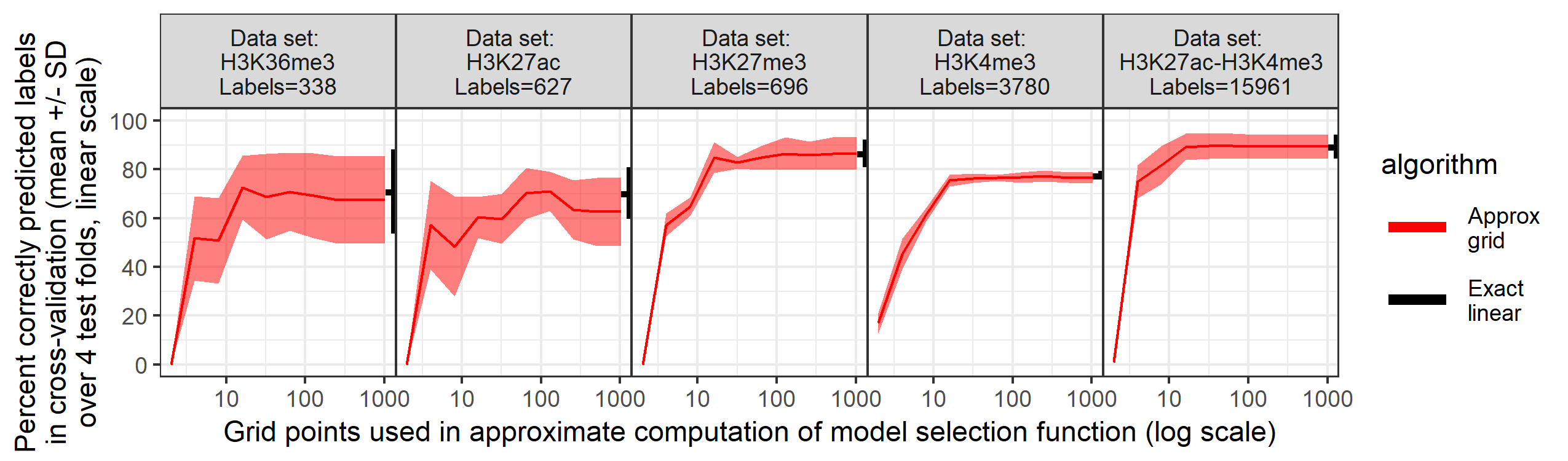}
  \vskip -0.3cm
  \caption{Prediction accuracy on held-out test data increases as a function of number of points used in approximate grid search algorithm (red line/band); it takes 10--100 grid points to achieve the maximum accuracy in each data set (panels), which is also achieved by the proposed linear time exact algorithm (black point/error bar on right).}
  \label{fig:chipseq-cv}
\end{figure*}

In this section we aim to demonstrate that the proposed exact algorithm results in more accurate predictions than a naïve approximate grid search.
To examine the accuracy of our algorithm, we consider several supervised changepoint detection problems from the UCI \texttt{chipseq} data, which contain labels that indicate presence/absence of changepoints in particular data subsets. Accurate changepoint detection in these data is important in order to characterize active/inactive regions in the human epigenome.

Here we give a brief summary of the supervised learning framework for changepoint detection; for details see \citep{HOCKING-penalties}. Each observation $i$ is represented by a numeric data vector/sequence $\mathbf z_i$ along with a corresponding label set $\ell_i$. 
We compute a feature vector $\mathbf x_i$ then learn a penalty function $f(\mathbf x_i)=\log \lambda_i$ which results in a model $\hat \theta(\lambda_i)$. The goal is to learn a function $f$ that results in minimal errors with respect to the labels $\ell_i$ in test data sequences. 
In this context there is a model selection function $k_i^*$ which is specific to each data sequence $i$, and is used in two places during the learning and prediction (bold arrows in Figure~\ref{fig:computation-graph}).
First, it is used to compute the interval/output $ y_i=(\underline y_i, \overline y_i)$ of optimal penalty values for each training data sequence $i$, such that predicting $f(\mathbf x_i)=\log\lambda_i\in (\underline y_i, \overline y_i)$ results in minimal label errors. Second, it is used to compute the predicted model $\hat \theta(\lambda_i)$ given a predicted penalty $\lambda_i$.
We learn a linear $f$ by minimizing an L1-regularized cost function \citep{HOCKING-penalties}, using outputs $y_i$ computed by either our exact algorithm or a naïve approximate grid search with a variable number of penalties $\lambda$. 

We performed 4-fold cross-validation in five different labeled data sets (panels in Figure~\ref{fig:chipseq-cv}). We observed in each data set that it takes 10--100 penalties $\lambda$ in the grid search to achieve the maximum number of correctly predicted labels, which was also achieved by the proposed exact algorithm. Overall these data provide empirical evidence that, in the context of supervised changepoint detection problems, using an exact representation of the model selection function results in more accurate predictions than using an approximate representation obtained via grid search.

\section{Discussion and conclusions}
\label{sec:discussion}

For learning problems with $\ell_0$ regularization, we proposed a new dynamic programming algorithm for computing an exact representation of the model selection function~(\ref{eq:k*}). By bounding the number of iterations, we proved theoretically that the algorithm is linear time in the worst case. In real and synthetic data we empirically validated these bounds, and showed that the proposed linear time algorithm is orders of magnitude faster than two baselines. We used cross-validation in supervised changepoint detection problems to show that the exact representation provides more accurate predictions than the grid search approximation baseline.

Our algorithm requires no special data structures and can be efficiently implemented using arrays in standard C; our free software implementation is available at \url{https://github.com/tdhock/penaltyLearning/}. For reproducibility we also provide the source code that we used to make the figures at \url{https://github.com/tdhock/changepoint-data-structure}.
For future work we would like to consider selecting models from a partial set $S\subset \{1,\dots,N\}$, and develop an efficient algorithm for updating an exact representation of the corresponding model selection function.

\bibliographystyle{abbrvnat}
\bibliography{refs}
\end{document}